\documentclass[a4paper]{styles/svproc}

\usepackage{url}

\usepackage{newtxtext,newtxmath} 
\usepackage[T1]{fontenc}
\usepackage[utf8]{inputenc}

\usepackage{amsmath,amssymb}
\usepackage{graphicx}
\usepackage{booktabs}
\usepackage{multirow}
\usepackage{microtype}
\usepackage[table]{xcolor}
\usepackage{algorithm}
\usepackage{algpseudocode}
\usepackage{hyperref}
\usepackage{cleveref}
\usepackage[inline]{enumitem}
\usepackage{cleveref}
\usepackage{cite}
\usepackage{url}
\usepackage{subcaption}
\usepackage[font=small]{caption}

\newcommand{\Path}{\zeta}
\newcommand{\Paths}{\Omega_H}
\newcommand{\Belief}{b}
\newcommand{\Risk}{R}
\newcommand{\InfoS}{I_{\mathrm{S}}}
\newcommand{\InfoB}{B_{\alpha}}
\newcommand{\EntropyS}{H_{\mathrm{S}}}
\newcommand{\EntropyB}{H_{\mathrm{B}}}

\newcommand{\E}{\mathbb{E}}
\newcommand{\argmax}{\operatorname*{arg\,max}}

\begin{document}
\mainmatter              
\title{When Information is Worth the Risk:\\ Behavioral Valuation for Hazardous Robotic Exploration}
\titlerunning{When Information is Worth the Risk}
\author{
Alkesh K. Srivastava\inst{1}\and
Aamodh Suresh\inst{2,3} \and
Carlos Nieto-Granda\inst{2} \and
Philip Dames\inst{1}
}
\authorrunning{A.K. Srivastava et al.}
\institute{
Department of Mechanical Engineering, Temple University, Philadelphia, PA 19122, USA\thanks{
A. Srivastava and P. Dames were supported by NSF grant~CNS-2143312. Part of this work was conducted during A. Srivastava's internship at the U.S. Army Research Laboratory.}\
\email{{alkesh,pdames}@temple.edu}
\and
U.S. DEVCOM Army Research Laboratory, Adelphi, MD 20783, USA\\
\email{carlos.p.nieto2.civ@army.mil}
\and
Lobster Robotics, The Hague, South Holland, Netherlands\thanks{A. Suresh is currently with Lobster Robotics. This work was conducted while he was with the U.S. Army Research Laboratory.}\\
\email{aamodh@gmail.com}
}

\maketitle   

\begin{abstract}
Hazardous robotic exploration requires robots to map spatial risks, such as unsafe terrain, radiation, fire, mines, or structural damage, while operating where collecting information can itself cause failure. A highly informative path may expose the robot to hazards, terminate execution, and prevent future observations. Hazardous exploration therefore requires deciding not only where uncertainty is largest, but when reducing it is worth the risk. This paper introduces a valuation-layer view of this problem. We keep the belief update, sensor model, physical risk model, and finite-horizon informative planner fixed, and change only the scalar objective used to rank feasible paths. Within this framework, we introduce a risk-augmented Behavioral Information objective based on Prelec probability weighting, yielding an interpretable family of conservative-to-aggressive information-risk valuations. Theoretically, we show that valuation parameters create switching boundaries between high-information/high-risk and lower-information/lower-risk paths, and induce a transformed Pareto-frontier structure over feasible exploration policies. Large-scale failure-truncated grid-world experiments show that valuation alone reshapes the information-risk frontier. Shannon information planning remains a strong raw-information baseline, while risk-aware objectives can reduce hazard exposure and robot losses by avoiding failures that truncate future sensing. Risk-augmented Behavioral valuation is Pareto-competitive with standard risk-aware baselines and provides interpretable conservative and intermediate regimes. These results support a framework in which robots reason not only about how much uncertainty an action reduces, but whether that reduction is worth the risk required to obtain it.
\end{abstract}


    
\section{Introduction}
\label{sec:introduction}

Robots are often deployed for the ``three Ds'' of robotics: tasks that are dull, dirty, or dangerous. This paper focuses on the dangerous case, where human access is unsafe and the hazards are initially unknown. Examples include earthquakes and building collapses, mines and tunnels, fires or chemical leaks, damaged nuclear infrastructure, planetary surfaces, and other high-risk environments. In these settings, exploration is not only about building a map of the world; it is about building a map of risk. A robot may need to identify hazards while simultaneously avoiding the regions it is trying to understand. This creates a fundamental tension: the most informative path may also be the path most likely to destroy the robot. Entering a poorly understood region can reduce uncertainty, but it can also terminate the mission, prevent future observations, and eliminate the robot as a sensing resource. Hazardous exploration therefore raises a question that is largely absent from benign information gathering: \emph{when is information worth the risk of acquiring it?}

Information-theoretic planning provides a principled way to select actions that reduce uncertainty. A robot can choose paths that maximize expected entropy reduction or mutual information under a probabilistic sensor model. In safe environments, this abstraction is often appropriate because the value of an observation is determined primarily by how much it improves the belief. In standard informative planning, acquiring data already incurs costs such as motion, time, or energy. In hazardous environments, however, the acquisition cost can be catastrophic and survival-coupled. Two paths with similar expected information gain can have very different mission value if one is likely to cause failure, truncate future sensing, or expose the robot to severe hazards. 

Risk-aware planning commonly addresses this issue by adding constraints or penalties to an information-seeking objective. Such approaches are important, but they also reveal a deeper conceptual distinction. Hazardous exploration is not only a problem of choosing feasible paths; it is also a problem of valuing those paths under risk. The Bayesian filter determines what the robot believes, the planner determines what actions are feasible, and the valuation functional determines when a feasible information-gathering action is worth taking. We isolate this valuation layer by keeping the belief update and planner fixed while changing only how candidate paths are scored under risk. This lets us ask how information-risk valuation alone reshapes exploration behavior.

\begin{figure}[t]
    \centering
    \includegraphics[width=\linewidth]{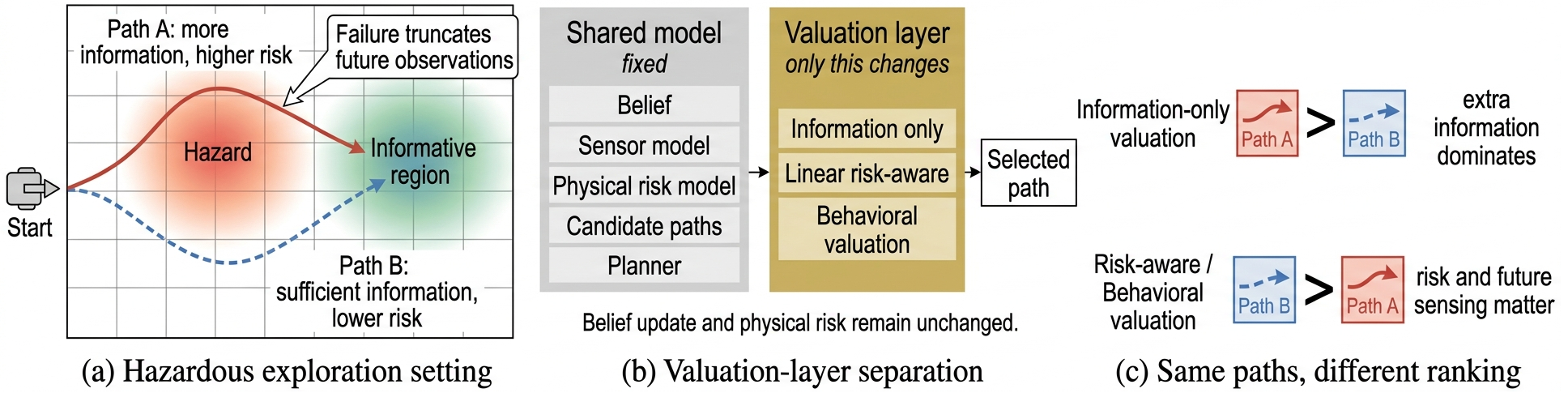}
   \caption{Hazardous exploration as a valuation problem. (a) Candidate paths trade off expected information and physical risk under failure-truncated execution. (b) The belief update, sensor model, risk model, candidate paths, and planner are fixed; only the valuation layer changes. (c)~Different valuation objectives can rank the same candidate paths differently.}
    \label{fig:concept}
\end{figure}

\Cref{fig:concept} summarizes the valuation-layer view: the belief update, sensor model, physical risk model, candidate paths, and planner remain fixed, while only the objective used to rank information-risk tradeoffs changes. We instantiate this layer with a risk-augmented Behavioral Information objective based on Prelec probability weighting, applied only at the objective level so Bayesian beliefs and physical risk probabilities remain unchanged. {Here, Behavioral Information Gain is used as a planning valuation functional rather than as a replacement for Shannon mutual information.} This yields an interpretable nonlinear information-risk tradeoff; we derive path-switching boundaries and a transformed Pareto interpretation, then show that valuation alone reshapes exploration behavior in large-scale failure-truncated experiments.

The contributions of this paper are fourfold. First, we formulate hazardous robotic exploration as a valuation problem layered on top of standard Bayesian belief updates and finite-horizon planning. Second, we introduce a risk-augmented Behavioral Information objective based on Prelec probability weighting, providing an interpretable nonlinear family of information-risk objectives. Third, we derive path-switching conditions and interpret the objective as a scalarization of a transformed information-risk tradeoff set. Fourth, we evaluate the framework in large-scale failure-truncated experiments with standard risk-aware baselines, showing that valuation alone exposes a nontrivial information-risk frontier.

\section{Related Work}
\label{sec:related_work}

This work lies at the intersection of informative path planning, failure-aware exploration, risk-aware planning, behavioral probability weighting, and multiobjective decision making. We review these areas to position the main distinction of this paper.

\paragraph{Informative Path Planning and Active Sensing:}
Informative path planning and active sensing select robot actions to reduce uncertainty about an unknown environment, often using entropy reduction or mutual information. Foundational work established information gain as a planning objective for mobile robots~\cite{bourgault2002information}, followed by information-theoretic control and distributed sensing for robotic sensor networks~\cite{julian2012distributed,charrow2014approximate}, mutual-information policies for active sensing~\cite{macdonald2019active}, and cooperative search in risky or hazardous environments~\cite{flint2003cooperative,yang2007multi,tomo,schwager2017multi}. These works provide principled methods for deciding where observations are valuable. However, standard informative planning often treats path cost through deterministic budgets, expected costs, or task-specific risk models. In hazardous exploration, information acquisition may be probabilistic and catastrophic: the robot may fail before collecting future observations. Our work keeps the informative planning substrate fixed and studies how candidate paths should be valued when expected information gain and physical failure risk are coupled.

\paragraph{Failure-Truncated and Path-Based Sensing:}
A closely related line of work studies information gathering when sensing is failure-truncated. Otte and Sofge introduced \emph{path-based sensors}, where a sensor reports whether an event occurred somewhere along a path without localizing where it occurred~\cite{Otte.Sofge.TASE21}. Subsequent work extended this idea to distributed multi-robot search in communication-denied hazardous environments and broader Bayesian-network formulations~\cite{srivastava2022distributed,srivastava2026bayesian}. Unlike path-based sensing, our model uses cell-level binary hazard observations along executed paths. However, both settings share the same coupling: failure truncates future sensing. We study the complementary valuation question of how to rank candidate paths when expected information and failure risk compete under a fixed sensing and planning model.

\paragraph{Risk-Aware and Survivability-Constrained Planning:} Risk-aware planning addresses hazards by constraining or penalizing risky actions. Survivability-aware routing problems, such as the Team Surviving Orienteers problem, maximize coverage or reward subject to survival constraints~\cite{jorgensen2018team}, and related work studies survivability and diversity in expendable robot teams~\cite{lyu2016k}. Recent risk-aware informative path planning also incorporates explicit risk measures, including CVaR-based viewpoint selection for 3D surface inspection~\cite{xu2025risk}. These methods explicitly regulate risk, but primarily through constraints, penalties, or risk measures. In contrast, our framework studies risk valuation as an objective-layer mechanism and compares directly against risk-penalized and chance-constrained Shannon baselines.
\paragraph{Behavioral Probability Weighting and Behavioral Entropy:}
Prelec probability weighting and related models from cumulative prospect theory represent nonlinear probability valuation~\cite{prelec1998probability}. In robotics, such weighting has been used for behavioral path planning under uncertain risks and rewards~\cite{suresh2023purr}. Behavioral Entropy extends this idea to exploration by replacing Shannon entropy with a Prelec-weighted uncertainty functional~\cite{suresh2024robotic}, with follow-on work in adaptive source seeking and behavior-adaptive hazard localization~\cite{ghimire2025beasst,srivastava2025behaviorally}. These works motivate the use of probability-weighted uncertainty, but they do not isolate the explicit single-agent tradeoff between Behavioral Information Gain and physical failure risk under a fixed planner, Bayesian update, and physical risk model. Our contribution is a risk-augmented Behavioral Information objective that applies probability weighting only at the valuation layer, leaving the belief update, sensor model, and physical risk probabilities unchanged.

\paragraph{Multiobjective Planning and the Valuation-Layer Gap:}

The information-risk tradeoff can also be viewed through the lens of multiobjective planning. Pareto Monte Carlo Tree Search addresses multiobjective informative planning by searching over competing objectives such as exploration and exploitation~\cite{Chen2019ParetoMC}. Recent work on multiobjective task assignment studies how to compute statistically distinct plans that represent different tradeoffs under uncertainty~\cite{wilde2024statistically}. Our setting similarly involves competing objectives, but the focus is different. Rather than developing a new multiobjective search algorithm or attempting to enumerate a Pareto set, we study how a scalar valuation functional transforms the risk-information geometry of a fixed feasible path set. This gives the paper its central positioning---\emph{hazardous exploration is not only a problem of estimating hazards or planning feasible paths, but of valuing information when acquiring it can cause failure.}

\section{Hazardous Exploration as a Valuation Problem}
\label{sec:formulation}

We consider a robot operating in an unknown hazardous environment. The robot's mission is to infer the spatial distribution of static hazards while selecting paths whose execution may expose it to failure. Each cell \(c\in\mathcal{C}\) has a binary latent hazard state \(Z_c\in\{0,1\}\), where \(Z_c=1\) indicates the presence of a hazard, and an associated lethality \(\ell(c)\in[0,1]\), which determines the probability of failure if the robot encounters that hazard. Thus, a candidate path has both an information value, through the observations it may provide, and a physical risk, through the hazards it may traverse. The central question is how such paths should be valued when information gain and failure risk are~coupled.

Let \(x_t\in\mathcal{X}\) denote the robot state at time \(t\), and let \(M=\{Z_c\}_{c\in\mathcal{C}}\) denote the latent hazard map. The robot maintains the Bayesian marginal belief as \(\Belief_t(c)=\Pr(Z_c=1\mid h_t),\) conditioned on the action-observation history \(h_t\). This belief represents the robot's current estimate of where hazards are located. At each decision epoch, the robot evaluates feasible finite-horizon candidate paths. For horizon \(H\), define
\begin{equation}
    \Paths(x_t)=\{(x_t,x_{t+1},\ldots,x_{t+H}) : x_{k+1}\in f(x_k,u_k),\ u_k\in\mathcal{U}\},
\end{equation}
where \(f\) encodes the robot dynamics and grid constraints, and \(\mathcal{U}\) is the action set. A candidate path is denoted by \(\Path=(x_t,x_{t+1},\ldots,x_{t+H})\in\Paths(x_t)\). All compared methods use this same feasible path set; they differ only in how paths are valued.

Executing or evaluating a path $\Path$ 
generates cell-level observations along the cells visited by the path. Let \(\mathcal{V}(\Path)\subseteq\mathcal{C}\) denote the unique cells visited by \(\Path\). For each observed cell \(c\in\mathcal{V}(\Path)\), the robot receives a binary observation \(Y_c\in\{0,1\}\) of the latent hazard state \(Z_c\). The observation model is a noisy binary channel with true-positive rate \(p_{\mathrm{tp}}\) and false-positive rate \(p_{\mathrm{fp}}\). We write \(Y_{\Path}=\{Y_c:c\in\mathcal{V}(\Path)\}\) for the collection of path observations. After observing \(y_{\Path}\), the robot updates its hazard belief using the standard Bayesian update
\begin{equation}
    \label{eq:belief_update}
    \Belief_{t+1}=\tau(\Belief_t,\Path,Y_{\Path}).
    \end{equation}
Here, \(\tau\) is the Bayesian belief-update operator induced by the observation likelihood and the prior belief. 

The information value of a path is measured by expected uncertainty reduction in the hazard belief. For a binary cell belief \(q\in[0,1]\), the Shannon entropy is
\begin{equation}
    H_{\mathrm{S}}(q)=-q\log q-(1-q)\log(1-q),
    \label{eq:binary_entropy}
\end{equation}
with \(0\log0=0\). The total hazard-belief entropy is approximated by the sum of cell entropies,
\begin{equation}
    \EntropyS(\Belief_t)=\sum_{c\in\mathcal{C}}H_{\mathrm{S}}(\Belief_t(c)).
    \label{eq:total_hazard_entropy}
\end{equation}

The Shannon information gain of a path is the expected reduction in this entropy after incorporating the path observations:
\begin{equation}
    \InfoS(\Path;\Belief_t)=\EntropyS(\Belief_t)-\E_{Y_{\Path}\sim p(\cdot\mid \Belief_t,\Path)}\left[\EntropyS(\tau(\Belief_t,\Path,Y_{\Path}))\right].
    \label{eq:shannon_ig}
\end{equation}

Each candidate path also has physical risk. The additive expected exposure is
\begin{equation}
    R_{\mathrm{exp}}(\Path;\Belief_t)=\sum_{c\in\mathcal{V}(\Path)}\Belief_t(c)\ell(c),
    \label{eq:exposure_risk}
\end{equation}
and the corresponding path failure probability is
\begin{equation}
    R_{\mathrm{fail}}(\Path;\Belief_t)=1-\prod_{c\in\mathcal{V}(\Path)}\left(1-\Belief_t(c)\ell(c)\right).
    \label{eq:failure_risk}
\end{equation}
The experiments use \(R_{\mathrm{fail}}\) over unique visited cells as the hazard-risk term. Unless otherwise stated, \(\Risk(\Path;\Belief_t)\) denotes this expected hazard failure probability.

The robot selects a path by maximizing a scalar valuation objective,
\begin{equation}
    \Path^*\in\argmax_{\Path\in\Paths(x_t)}J(\Path;\Belief_t).
    \label{eq:generic_planning_problem}
\end{equation}
The central question of this paper is how the choice of \(J\) changes exploration behavior when the feasible paths, Bayesian update, sensor model, and physical risk model are held fixed. In this way, hazardous exploration is studied as a valuation problem layered on top of ordinary Bayesian inference and finite-horizon planning.

\section{Behavioral Risk Valuation}
\label{sec:behavioral}

This section defines the valuation family used to rank candidate paths under risk. The key modeling choice is that probability weighting is applied only at the objective layer. It changes how information and risk are valued/perceived before path selection.

\subsection{Prelec Probability Weighting}
\label{subsec:4.1}
For \(p\in[0,1]\), \(\alpha>0\), and \(\beta>0\), the Prelec weighting function~\cite{prelec1998probability} is
\begin{equation*}
    w_{\alpha,\beta}(p)=\exp\left[-\beta(-\log p)^{\alpha}\right],
    \label{eq:prelec}
\end{equation*}
with endpoint conventions \(w_{\alpha,\beta}(0)=0\) and \(w_{\alpha,\beta}(1)=1\). The parameter \(\alpha\) controls the curvature of the weighting function, while \(\beta\) controls its nontrivial fixed point; prior work has used this fixed-point structure to choose \(\beta\) so that Behavioral Entropy satisfies admissibility properties as a generalized entropy~\cite{suresh2024robotic}. In this paper, we use Prelec weighting as a path-valuation mechanism rather than as a new probabilistic model. We fix \(\beta=1\) in the experiments and write \(w_{\alpha}(p)\equiv w_{\alpha,1}(p)\), so that \(\alpha=1\) recovers the identity weighting \(w_{\alpha}(p)=p\). {We fix $\beta=1$ to preserve the identity reference $w_{1,1}(p)=p$ and isolate the effects of curvature through $\alpha$ and explicit risk sensitivity through $\eta$; varying $\beta$ would enlarge the valuation family by shifting the weighting function and is left for future study}. As shown in~\Cref{fig:prelec_weighting}, values \(\alpha<1\) overweight small probabilities and produce more conservative valuation of rare hazards, while values \(\alpha>1\) underweight small probabilities and can produce more aggressive valuation.

\begin{figure}[t]
    \centering
    \includegraphics[width=0.5\linewidth]{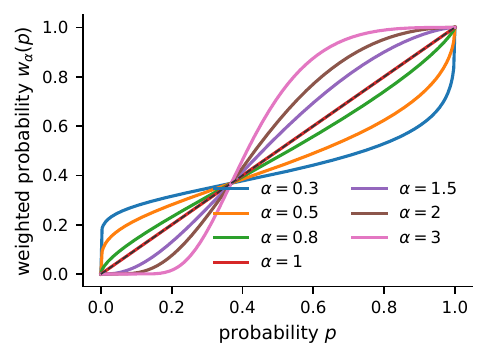}
    \caption{Prelec weighting of scalar path risk for different \(\alpha\) values with \(\beta=1\). The physical risk probability is unchanged; \(\alpha\) changes only the objective-layer valuation of that risk.}
    \label{fig:prelec_weighting}
\end{figure}

\subsection{Behavioral Entropy and Behavioral Information Gain}

Because the hazard belief is represented by cell-wise Bernoulli marginals, we define Behavioral Entropy at the cell level. For a cell with hazard probability \(q\in[0,1]\), the Prelec-weighted binary probabilities are
\begin{equation}
    \tilde{q}_{\alpha}
    =
    \frac{w_{\alpha}(q)}
    {w_{\alpha}(q)+w_{\alpha}(1-q)},
    \qquad
    1-\tilde{q}_{\alpha}
    =
    \frac{w_{\alpha}(1-q)}
    {w_{\alpha}(q)+w_{\alpha}(1-q)} .
    \label{eq:binary_weighted_probs}
\end{equation}
The binary Behavioral Entropy is\(H_{\mathrm{B},\alpha}(q)=-\tilde{q}_{\alpha}\log \tilde{q}_{\alpha}-(1-\tilde{q}_{\alpha})\log(1-\tilde{q}_{\alpha})\), with the convention \(0\log0=0\). The total Behavioral hazard-belief entropy is approximated by the sum of cell entropies, $\EntropyB(\Belief_t) = \sum_{c\in\mathcal{C}} H_{\mathrm{B},\alpha}(\Belief_t(c)).$

The Behavioral Information Gain of a path is defined as the expected reduction in this Behavioral hazard-belief entropy after incorporating the path observations:
\begin{equation}
    \InfoB(\Path;\Belief_t)
    =
    \EntropyB(\Belief_t)
    -
    \E_{Y_{\Path}\sim p(\cdot\mid \Belief_t,\Path)}
    \left[
        \EntropyB\!\left(\tau(\Belief_t,\Path,Y_{\Path})\right)
    \right].
    \label{eq:behavioral_ig}
\end{equation}
The posterior \(\tau(\Belief_t,\Path,Y_{\Path})\) and the expectation over \(Y_{\Path}\) are ordinary Bayesian quantities. Only the entropy functional used to evaluate prior and posterior beliefs is changed; the belief update itself remains unchanged.

\begin{remark}
We use the term Behavioral Information Gain rather than Behavioral Mutual Information. Since probability weighting changes the entropy functional, \eqref{eq:behavioral_ig} should be interpreted as a valuation functional for planning rather than as a replacement theorem for Shannon mutual information.
\end{remark}

\subsection{Risk-Augmented Behavioral Objective}

The central valuation family is
\begin{equation}
    J_{\alpha,\eta}(\Path;\Belief_t)=\InfoB(\Path;\Belief_t)-\eta\,w_{\alpha}\!\left(\Risk(\Path;\Belief_t)\right).
    \label{eq:risk_aug_behavioral}
\end{equation}
Here, \(\Risk(\Path;\Belief_t)\) is the ordinary expected hazard failure probability defined in Sec.~\ref{sec:formulation}. The parameter \(\alpha\) controls nonlinear risk valuation, while \(\eta\ge0\) controls the explicit tradeoff between Behavioral Information Gain and perceived path risk.

This separation is central to the paper. The robot does not update its hazard belief using \(w_{\alpha}\), and sampled failure events are generated from the physical hazard and malfunction models. Prelec weighting affects only the ranking of candidate paths before execution.

The objective includes three useful special cases:
\begin{itemize}
    \item If \(\eta=0\), the objective reduces to pure Behavioral Information Gain.
    \item If \(\alpha=1\), then \(w_{\alpha}(p)=p\), so the risk term becomes a linear penalty on physical path risk.
    \item If \(\eta=0\) and \(\alpha=1\), the normalized Behavioral Information Gain reduces to Shannon information gain in the binary-cell setting used by the planner.
\end{itemize}

\subsection{Baselines}
\label{sec:behavioral:baselines}
We compare \eqref{eq:risk_aug_behavioral} to three standard baselines:
\begin{align}
    J_{\mathrm{S}}(\Path;\Belief_t)&=\InfoS(\Path;\Belief_t), \label{eq:shannon_baseline}\\
    J_{\mathrm{S},\eta}(\Path;\Belief_t)&=\InfoS(\Path;\Belief_t)-\eta\Risk(\Path;\Belief_t), \label{eq:risk_penalized_shannon}\\
    \Path^*&\in\argmax_{\Path\in\Paths(x_t)}\InfoS(\Path;\Belief_t)\quad\text{s.t.}\quad \Risk(\Path;\Belief_t)\le\delta. \label{eq:chance_shannon}
\end{align}
Eq.~\eqref{eq:risk_penalized_shannon} is the most direct risk-aware baseline. When \(\alpha=1\), the risk component of \cref{eq:risk_aug_behavioral} becomes a standard linear penalty on risk. Comparisons with risk-penalized Shannon therefore test what nonlinear Prelec risk valuation adds beyond ordinary linear risk penalization.

\section{Theoretical Analysis}
\label{sec:theory}

This section analyzes the role of valuation in hazardous exploration. The Bayesian update, sensor model, physical risk model, feasible path set, and finite-horizon search procedure are fixed; only the scalar objective \(J_{\alpha,\eta}\) in~\cref{eq:risk_aug_behavioral} changes. We show how this valuation layer induces path-switching boundaries and a transformed information-risk frontier.

\subsection{Valuation-Induced Path Switching}

Consider two feasible paths \(\Path_1,\Path_2\in\Paths(x_t)\). Let
\begin{align}
    \Delta B_{\alpha}&=\InfoB(\Path_1;\Belief_t)-\InfoB(\Path_2;\Belief_t), \label{eq:delta_b}\\
    \Delta W_{\alpha}&=w_{\alpha}\!\left(\Risk(\Path_1;\Belief_t)\right)-w_{\alpha}\!\left(\Risk(\Path_2;\Belief_t)\right). \label{eq:delta_w}
\end{align}
Here, \(\Delta B_{\alpha}\) is the Behavioral Information advantage of \(\Path_1\) over \(\Path_2\), and \(\Delta W_{\alpha}\) is its perceived-risk disadvantage. When both are positive, \(\Path_1\) is more informative but also~riskier.

\begin{proposition}[Risk-information switching boundary]
\label{prop:switching}
Suppose \(\Delta B_{\alpha}>0\) and \(\Delta W_{\alpha}>0\). Then \(\Path_1\) and \(\Path_2\) are indifferent under \(J_{\alpha,\eta}\) at
\begin{equation}
    \eta^*(\alpha)=\frac{\Delta B_{\alpha}}{\Delta W_{\alpha}}.
    \label{eq:eta_star}
\end{equation}
For \(\eta<\eta^*(\alpha)\), the higher-information path \(\Path_1\) is preferred. For \(\eta>\eta^*(\alpha)\), the lower-risk path \(\Path_2\) is preferred.
\end{proposition}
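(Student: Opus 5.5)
The plan is to compute the difference of the two objective values directly and read off its sign as an affine function of \(\eta\). Fix \(\alpha\) and the belief \(\Belief_t\). By the definition of \(J_{\alpha,\eta}\) in \eqref{eq:risk_aug_behavioral}, both the Behavioral Information Gain \(\InfoB\) and the perceived risk \(w_{\alpha}(\Risk)\) are independent of \(\eta\); the parameter \(\eta\) enters only as a multiplier of the risk term. Subtracting the two objectives and using the definitions \eqref{eq:delta_b}--\eqref{eq:delta_w}, I would write
\begin{equation*}
    D(\eta)\;:=\;J_{\alpha,\eta}(\Path_1;\Belief_t)-J_{\alpha,\eta}(\Path_2;\Belief_t)
    \;=\;\Delta B_{\alpha}-\eta\,\Delta W_{\alpha}.
\end{equation*}
This is the only real computation. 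It uses nothing about the internal structure of \(\InfoB\) or of the Prelec weighting beyond the fact that these quantities are well-defined finite real numbers for the two fixed paths. Finiteness holds because \(H_{\mathrm{B},\alpha}\) is bounded by \(\log 2\) per cell, and \(w_{\alpha}\) maps \([0,1]\) into \([0,1]\).

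Next, I would note that \(D\) is affine in \(\eta\) with slope \(-\Delta W_{\alpha}<0\), so it is strictly decreasing and has exactly one root. Solving \(D(\eta)=0\) gives \(\eta^*(\alpha)=\Delta B_{\alpha}/\Delta W_{\alpha}\), which is well defined because \(\Delta W_{\alpha}>0\). This root is the indifference point, since equal objective values mean neither path is strictly preferred under the argmax rule \eqref{eq:generic_planning_problem}. Because \(\Delta B_{\alpha}>0\) as well, \(\eta^*(\alpha)\) is strictly positive, so the boundary lies inside the admissible range \(\eta\ge 0\).

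Finally, strict monotonicity gives the two regimes. For \(\eta<\eta^*(\alpha)\) we have \(D(\eta)>D(\eta^*)=0\), so \(\Path_1\) attains the strictly larger objective and is preferred. For \(\eta>\eta^*(\alpha)\) we have \(D(\eta)<0\), so \(\Path_2\) is preferred. I do not expect any step to be a genuine obstacle; the argument is linear algebra in one variable.

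The only point needing care is interpretive. \emph{Preferred} here means pairwise ranking of \(\Path_1\) against \(\Path_2\) under \(J_{\alpha,\eta}\), not global optimality over all of \(\Paths(x_t)\). I would state that explicitly. I would also remark that the dependence on \(\alpha\) enters only through the constants \(\Delta B_{\alpha}\) and \(\Delta W_{\alpha}\), which motivates writing the boundary as \(\eta^*(\alpha)\).
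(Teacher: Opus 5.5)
Your proposal is correct and matches the paper's proof: the paper also writes $J_{\alpha,\eta}(\Path_1;\Belief_t)-J_{\alpha,\eta}(\Path_2;\Belief_t)=\Delta B_{\alpha}-\eta\,\Delta W_{\alpha}$ and uses $\Delta W_{\alpha}>0$ to read off the sign change at $\eta=\Delta B_{\alpha}/\Delta W_{\alpha}$. Your extra remarks (finiteness of the quantities, $\eta^*(\alpha)>0$, and ``preferred'' meaning pairwise ranking rather than global optimality) are sound clarifications the paper leaves implicit.
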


\begin{proof}
The objective difference is
\[
    J_{\alpha,\eta}(\Path_1;\Belief_t)-J_{\alpha,\eta}(\Path_2;\Belief_t)=\Delta B_{\alpha}-\eta\Delta W_{\alpha}.
\]
Since \(\Delta W_{\alpha}>0\), the sign of this expression changes at \(\eta=\Delta B_{\alpha}/\Delta W_{\alpha}\). The stated preference regions follow immediately.
\qed
\end{proof}
This result gives the basic regime boundary. A high-information/high-risk path is selected when its information advantage outweighs the perceived-risk penalty, while a lower-risk path is selected once risk sensitivity is sufficiently large. It also shows why pure Behavioral Information Gain is not necessarily risk-aware: without the explicit risk term, physical path risk cannot trade off against information gain.

For fixed \(\eta\), changing \(\alpha\) induces a second type of switching because both the Behavioral Information value and the perceived-risk term depend on probability weighting. The corresponding \(\alpha\)-switching boundary is defined implicitly by
\begin{equation}
    \InfoB(\Path_1;\Belief_t)-\InfoB(\Path_2;\Belief_t)=\eta\left[w_{\alpha}\!\left(\Risk(\Path_1;\Belief_t)\right)-w_{\alpha}\!\left(\Risk(\Path_2;\Belief_t)\right)\right].
    \label{eq:alpha_star}
\end{equation}
Closed-form solutions are generally unavailable, but \cref{eq:alpha_star} shows that changing \(\alpha\) can move the system across preference regimes even when the candidate paths are fixed. \Cref{fig:theory_geometry}(a) shows these regimes in the \((\alpha,\eta)\) plane, while \Cref{fig:theory_geometry}(b) shows the corresponding nonmonotone switching curve \(\eta^*(\alpha)\). Together, these plots illustrate the valuation-layer claim that path preference changes because the same information-risk tradeoff is valued differently.

\begin{figure}[t]
    \centering
    \begin{subfigure}{0.32\linewidth}
        \centering
        \includegraphics[width=\linewidth]{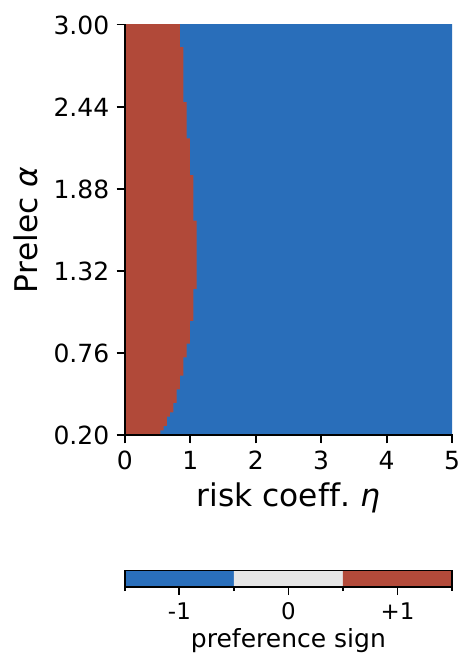}
        \label{fig:preference_map}
    \end{subfigure}
    \hfill
    \begin{subfigure}{0.32\linewidth}
        \centering
        \includegraphics[width=\linewidth]{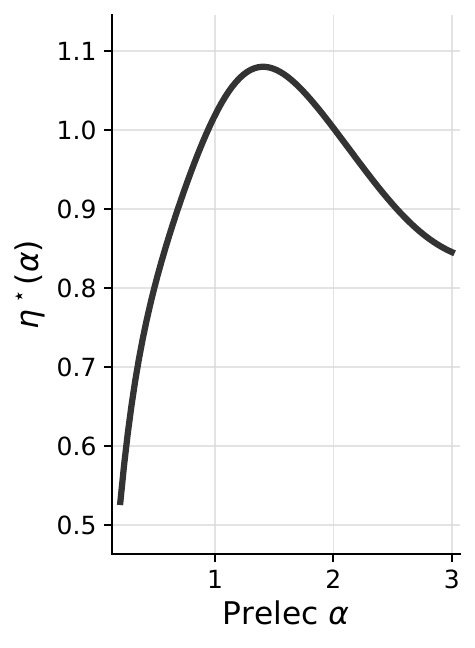}
        \label{fig:eta_star}
    \end{subfigure}
    \hfill
    \begin{subfigure}{0.32\linewidth}
        \centering
        \includegraphics[width=\linewidth]{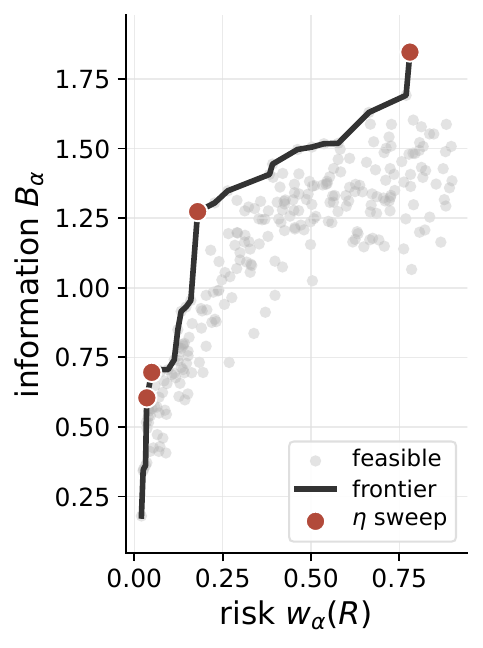}
        \label{fig:pareto_geometry}
    \end{subfigure}
    \caption{Theoretical geometry of valuation-induced path selection. (a) Preference regions in the \((\alpha,\eta)\) plane, where positive values select the high-information/high-risk path and negative values select the lower-risk path. (b) The nonmonotone switching boundary \(\eta^*(\alpha)\). (c) For fixed \(\alpha\), varying \(\eta\) selects supported points of the transformed information-risk tradeoff set.}
    \label{fig:theory_geometry}
\end{figure}

\subsection{Transformed Information-Risk Frontier}
\label{subsec:5.2}
The switching analysis compares two paths. We now extend the same idea to the full feasible path set. For fixed \(\alpha\), each candidate path induces a point in a transformed risk-information plane,
\[
    \Path \mapsto
    \bigl(
        w_{\alpha}\bigl(\Risk(\Path;\Belief_t)\bigr),
        \InfoB(\Path;\Belief_t)
    \bigr),
\]
where the first coordinate is perceived path risk and the second is Behavioral Information Gain. The risk-augmented Behavioral objective ranks these points by a linear tradeoff in the transformed space.

\begin{proposition}[Transformed frontier scalarization]
\label{prop:pareto}
For fixed \(\alpha\), define the transformed tradeoff set
\begin{equation}
\begin{aligned}
    \mathcal{P}_{\alpha}
    =
    \big\{
    (&w_{\alpha}\!\left(\Risk(\Path;\Belief_t)\right),
    \InfoB(\Path;\Belief_t)) :
    &\Path\in\Paths(x_t)
    \big\}.
\end{aligned}
\label{eq:transformed_tradeoff_set}
\end{equation}
If \(\Path^*_{\alpha,\eta}\) is a unique maximizer of \(J_{\alpha,\eta}\), then its corresponding point in \(\mathcal{P}_{\alpha}\) is a supported Pareto point of the transformed tradeoff set. Varying \(\eta\) changes the supporting slope, while varying \(\alpha\) changes the transformed geometry of the tradeoff set.
\end{proposition}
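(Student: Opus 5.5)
The plan is to recast $J_{\alpha,\eta}$ as a linear functional on the transformed plane and read off supporting-hyperplane properties. Write $u(\Path)=w_{\alpha}(\Risk(\Path;\Belief_t))$ and $v(\Path)=\InfoB(\Path;\Belief_t)$, so each path maps to the point $(u,v)\in\mathcal{P}_{\alpha}$ and $J_{\alpha,\eta}(\Path;\Belief_t)=v-\eta u=\langle(-\eta,1),(u,v)\rangle$. The transformed tradeoff problem is to \emph{minimize} perceived risk $u$ while \emph{maximizing} information $v$. A point $p^*=(u^*,v^*)\in\mathcal{P}_{\alpha}$ is supported Pareto if it is Pareto optimal and there is a weight vector such that $p^*$ maximizes the associated linear scalarization over $\mathcal{P}_{\alpha}$, equivalently, if the line $v-\eta u=\mathrm{const}$ supports $\mathcal{P}_{\alpha}$ (and hence its convex hull) at $p^*$. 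Since $\Paths(x_t)$ is finite for a finite horizon on a grid, $\mathcal{P}_{\alpha}$ is finite and the maximum exists. This also parallels Proposition~\ref{prop:switching}, which is the two-point instance of the same linear comparison.

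The first step is supportedness. By definition of the maximizer, $v^*-\eta u^*\ge v-\eta u$ for all $(u,v)\in\mathcal{P}_{\alpha}$. So the line of slope $\eta$ through $p^*$ has all of $\mathcal{P}_{\alpha}$ weakly below it, which is exactly the supporting condition. The second step is Pareto optimality, argued by contradiction. Suppose some $(u,v)\in\mathcal{P}_{\alpha}$ dominates $p^*$, meaning $u\le u^*$ and $v\ge v^*$ with at least one inequality strict. Then, using $\eta\ge 0$, we get $v-\eta u\ge v^*-\eta u^*$.

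The expected obstacle is turning this weak inequality into a contradiction. There are two cases.
\begin{itemize}
\item If $v>v^*$, the inequality is strict, which contradicts maximality directly.
\item If $v=v^*$ and $u<u^*$, the inequality is strict only when $\eta>0$. For $\eta=0$ I would invoke uniqueness: the dominating point gives equal objective value, so it comes from a different path with the same $J$. That contradicts the assumption that $\Path^*_{\alpha,\eta}$ is the unique maximizer, provided the dominating point arises from a path other than $\Path^*_{\alpha,\eta}$. It must, because its coordinates differ from $p^*$.
\end{itemize}
So uniqueness is precisely what covers the degenerate $\eta=0$ boundary. It also shows that maximizers tied in $J$ need not be Pareto, which is why the hypothesis is needed.

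Finally, the qualitative claims follow from this representation. Changing $\eta$ rotates the normal $(-\eta,1)$, i.e.\ the slope of the supporting line, while $\mathcal{P}_{\alpha}$ stays fixed. Changing $\alpha$ leaves the path set and the physical risks untouched. It moves the points themselves, because it enters $u$ through $w_{\alpha}$ (a strictly increasing map, so the risk ordering is preserved but spacings change) and enters $v$ through the weighted entropy $H_{\mathrm{B},\alpha}$. Hence $\alpha$ changes the geometry of $\mathcal{P}_{\alpha}$ and its convex hull. I would close by remarking that only points on the upper-left convex hull can be reached by any $\eta$; unsupported Pareto points cannot be selected.
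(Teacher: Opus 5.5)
Your proposal is correct and takes the same route as the paper: rewrite $J_{\alpha,\eta}$ as the linear scalarization $b-\eta r_{\alpha}$ over $\mathcal{P}_{\alpha}$, conclude that the maximizer lies on a supporting line of slope $\eta$, and note that $\eta$ rotates that line while $\alpha$ moves the points. Your explicit non-dominance argument fills in a step the paper's proof only asserts, and it correctly pins down where uniqueness matters: to rule out an equal-information, lower-risk tie when $\eta=0$.
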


\begin{proof}
For fixed \(\alpha\), define the transformed coordinates
\[
    (r_{\alpha},b)
    =
    \left(
    w_{\alpha}\!\left(\Risk(\Path;\Belief_t)\right),
    \InfoB(\Path;\Belief_t)
    \right).
\]
The objective becomes
\begin{equation}
    J_{\alpha,\eta}=b-\eta r_{\alpha}.
\end{equation}
Thus, maximizing \(J_{\alpha,\eta}\) is equivalent to maximizing a linear scalarization over \(\mathcal{P}_{\alpha}\). The level sets are lines in the transformed risk-information plane with slope \(\eta\), so any unique maximizer is a supported point of the transformed Pareto frontier. Changing \(\eta\) changes the supporting slope; changing \(\alpha\) changes the transformed coordinates and therefore the geometry of the frontier.
\qed
\end{proof}

\Cref{fig:theory_geometry}(c) illustrates this interpretation. The feasible paths are fixed, but changing \(\eta\) selects different supporting points on the transformed frontier. Changing \(\alpha\) changes the perceived risk coordinate and, through \(\InfoB\), the information coordinate. This explains how Behavioral valuation can produce conservative, intermediate, and aggressive regimes without modifying the Bayesian filter or planner.

\begin{remark}[Supported frontier points]
Linear scalarization recovers supported Pareto points in the transformed space. Nonconvex portions of the tradeoff set may require constrained or explicitly multiobjective methods. Thus, \Cref{fig:theory_geometry}(c) should be interpreted as the geometry induced by the proposed valuation objective, not as an exhaustive enumeration of all Pareto-optimal paths. {In strongly nonconvex environments, unsupported Pareto points may represent useful intermediate information--risk tradeoffs that cannot be recovered by varying $\eta$ alone.}
\end{remark}
\subsection{Implications of the Valuation Layer}

The preceding analysis gives three implications that structure the empirical study. First, any change in selected paths can be attributed to valuation, as the feasible path set, Bayesian update, sensor model, and physical risk model are fixed, while only the scalar ranking objective changes. This makes the experiments a controlled test of objective-layer design rather than a comparison of different planners or filters. Second, the two valuation parameters have distinct geometric roles. For a fixed \(\alpha\), \(\eta\) changes the supporting slope of the information-risk tradeoff; changing \(\alpha\) changes the transformed geometry itself by altering perceived risk and Behavioral Information values. Third, the objective changes rankings only when information and risk compete. If one path is at least as informative and no riskier than another, monotonicity of \(w_{\alpha}\) prevents the dominated path from being preferred. Thus, the proposed Behavioral valuation provides an interpretable mechanism for selecting mission-dependent operating points along an information-risk frontier.

\section{Experimental Design}
\label{sec:experiments}

The experiments isolate the effect of the valuation objective. All methods use the same finite-horizon dynamic-programming planner, sensor model, Bayesian belief update, physical risk model, and feasible grid paths. At each deployment, the planner evaluates candidate paths using the shared information and risk models, ranks them using the chosen objective, executes the highest-valued path, and updates the belief from the observations collected. We compare the risk-neutral Shannon objective in~\eqref{eq:shannon_baseline}, pure Behavioral Information Gain obtained from~\eqref{eq:risk_aug_behavioral} with \(\eta=0\), risk-augmented Behavioral valuation in~\eqref{eq:risk_aug_behavioral}, risk-penalized Shannon planning in~\eqref{eq:risk_penalized_shannon}, and chance-constrained Shannon planning in~\eqref{eq:chance_shannon}. This separates the effects of changing the information functional, adding an explicit risk penalty, and applying nonlinear probability weighting to physical risk.
\subsection{Failure-Truncated Execution}
Execution is failure-truncated. After a path is selected, the robot follows it sequentially. At each visited cell, failure may occur due to the true hazard and lethality, or independently due to a mechanical malfunction. If failure occurs, execution stops, later cells on the planned path are not observed, and the next deployment restarts from the base. Thus, risk affects not only safety but also realized information gain. We record total failures and decompose them into hazard-induced and malfunction-induced losses. 
This separates losses caused by environmental hazard exposure from malfunctions.

\subsection{Experimental Stages and Metrics}
\label{subsec:6.2}
We evaluate the objectives in two stages. Stage 1 is the core frontier study, evaluating the full \((\alpha,\eta)\) grid on clustered and frontier-risk hazard structures. Stage 2 tests robustness using a reduced objective set across additional hazard structures, densities, and lethality values. Table~\ref{tab:experimental_stages} summarizes the design: Stage 1 includes \(13{,}600\) scenario--method--seed runs and Stage 2 includes \(36{,}000\), with each run consisting of 40 deployments. We report entropy reduction, expected hazard risk, expected total risk, sampled losses, and Pareto nondominance. Entropy reduction is the decrease in total Shannon hazard-belief entropy {summed over all map cells, reported in \texttt{nats} and not normalized by map size}; expected hazard risk is the cumulative planner risk; expected total risk also includes independent malfunction; and sampled losses are separated into hazard-induced and malfunction-induced losses. Because tight risk-aware baselines can avoid exploration altogether, policies with near-zero entropy reduction, near-base behavior, or frequent infeasible/no-op paths are marked degenerate; the main analysis focuses on the filtered nondegenerate frontier.

\begin{table}[t]
\centering
\caption{Summary of experimental stages.}
\label{tab:experimental_stages}
\begin{tabular}{lcc}
\toprule
 & Stage 1 & Stage 2 \\
\midrule
Purpose & Core frontier analysis & Robustness \\
Grid size & \(15\times15\) & \(15\times15\) \\
Hazard structures & 2 & 4 \\
Densities & \(0.10\) & \(0.05,0.10,0.20\) \\
Lethality & \(0.70\) & \(0.5,0.7,0.9\) \\
Sensor \(p_{\mathrm{TP}}\) & \(0.90\) & \(0.90\) \\
Sensor \(p_{\mathrm{FP}}\) & \(0.05\) & \(0.05\) \\
Malfunction rate & \(0.05\) & \(0.05\) \\
Seeds & 100 & 50 \\
Deployments & 40 & 40 \\
Scenario--method--seed runs & \(13{,}600\) & \(36{,}000\) \\
Planner evaluations & \(544{,}000\) & \(1{,}440{,}000\) \\
\bottomrule
\end{tabular}
\end{table}

\section{Results}
\label{sec:results}

We evaluate the proposed valuation-layer view under a fixed planning substrate. The results address three questions suggested by the theoretical analysis---\textit{Does changing the valuation objective reshape the information-risk frontier? Does failure-truncated execution make risk awareness relevant to realized information gain? And do risk-augmented Behavioral objectives remain Pareto-competitive across different hazard structures, densities, and lethality values?}

\Cref{fig:valuation_case_study} first shows a representative rollout from the simulation environment. The example illustrates the mechanism behind the aggregate results: under the same world, belief, sensing model, risk model, candidate paths, and planner, changing only the valuation objective changes the selected path and the realized sensing outcome under failure-truncated execution.

\begin{figure}[t]
    \centering
    \includegraphics[width=\linewidth]{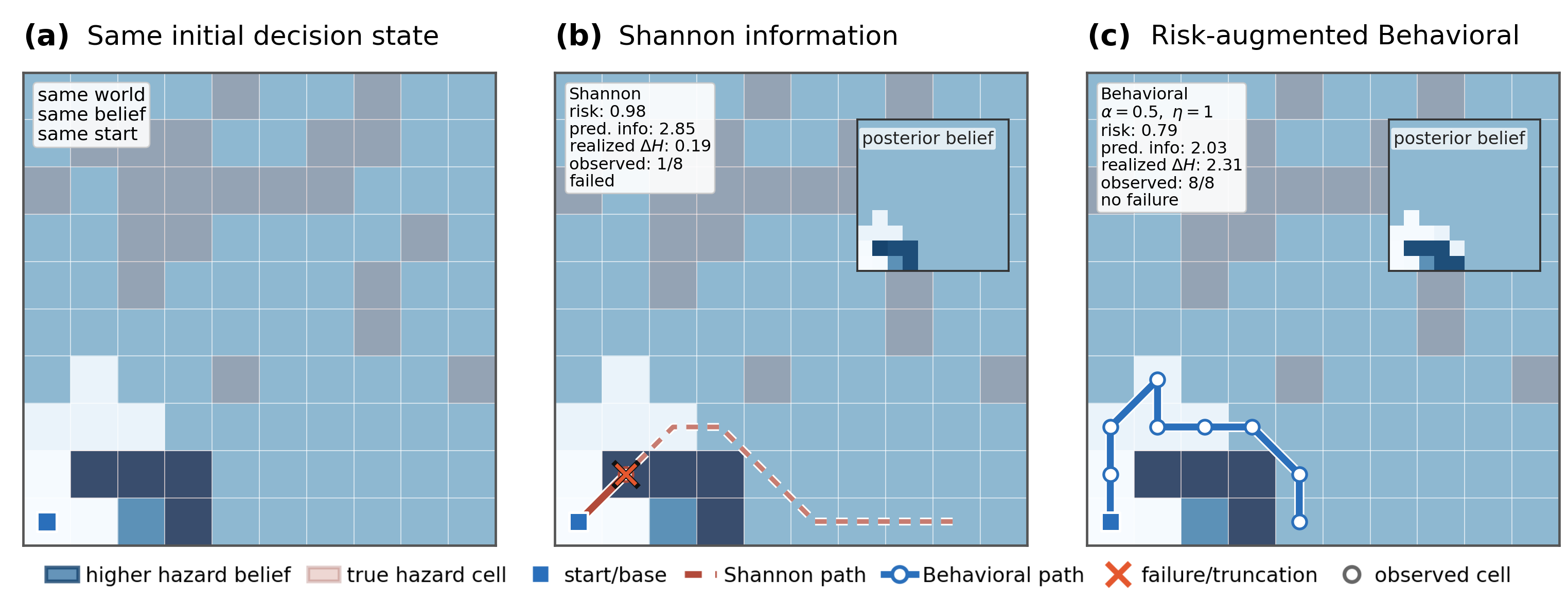}
    \caption{All panels share the same world, initial belief, start state, sensor model, risk model, candidate paths, and planner. Shannon information planning selects a high-information, high-risk path that fails early, collecting \(1/8\) planned observations, while the proposed Behavioral valuation selects a lower-risk path that completes execution and collects \(8/8\) observations.}
    \label{fig:valuation_case_study}
\end{figure}

\subsection{Valuation Reshapes the Information-Risk Frontier}
\label{subsec:7.1}
\Cref{fig:stage1_results}(a) shows the filtered nondegenerate Stage 1 information-risk frontier. Each point is produced by a different valuation objective under the same planning, sensing, Bayesian update, and physical risk models. The frontier shows that hazardous exploration does not yield a single dominant objective, but a set of mission-dependent operating points. Shannon occupies the high-information/high-risk region, while a linear risk penalty reduces hazard exposure with modest loss in entropy reduction. Risk-augmented Behavioral valuation adds nonlinear intermediate and conservative regimes along this frontier.

\Cref{fig:stage1_results}(b) shows how these regimes arise within the Behavioral family. The parameter \(\eta\) controls the perceived-risk penalty, while \(\alpha\) changes how physical risk is distorted before penalization. Low risk weight yields aggressive information seeking, moderate settings produce favorable information-per-risk tradeoffs, and larger risk weights, especially with small \(\alpha\), produce conservative or suppressed-exploration behavior. Thus, Behavioral valuation provides an interpretable mechanism for selecting mission-dependent operating points along the information-risk frontier. {For a new environment, we do not prescribe a universal $(\alpha,\eta)$ pair; instead, these parameters should be selected according to mission-level risk tolerance. In practice, $\eta$ controls the strength of the information--risk tradeoff, while $\alpha$ controls the nonlinear valuation of risk probabilities, so representative simulations or prior mission data can be used to identify acceptable operating points.}

\begin{figure}[t]
    \centering
    \begin{subfigure}{0.49\linewidth}
        \centering
        \includegraphics[width=\linewidth]{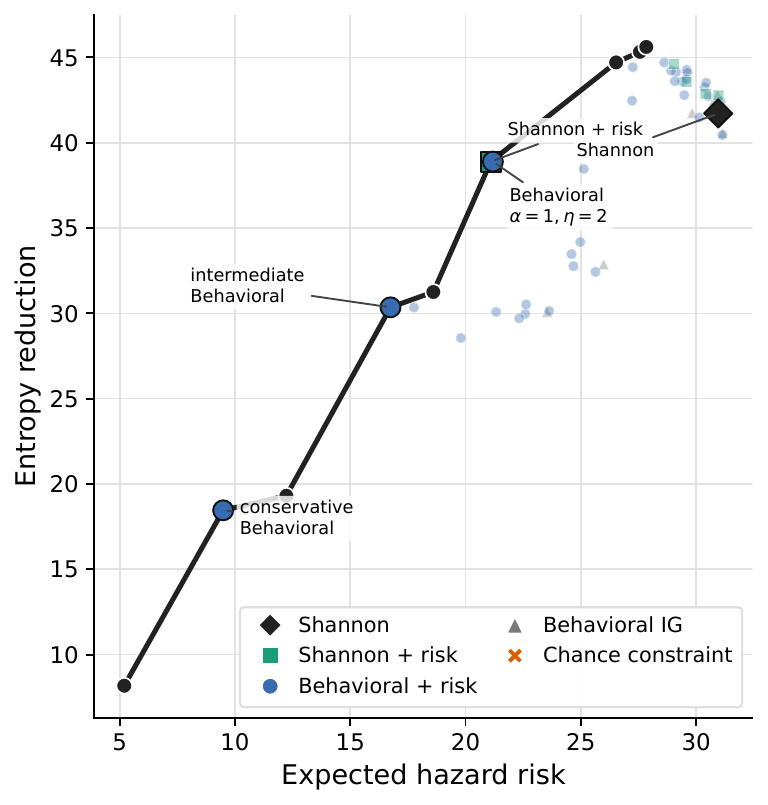}
        \caption{Information-risk frontier}
        \label{fig:stage1_pareto}
    \end{subfigure}
    \hfill
    \begin{subfigure}{0.49\linewidth}
        \centering
        \includegraphics[width=\linewidth]{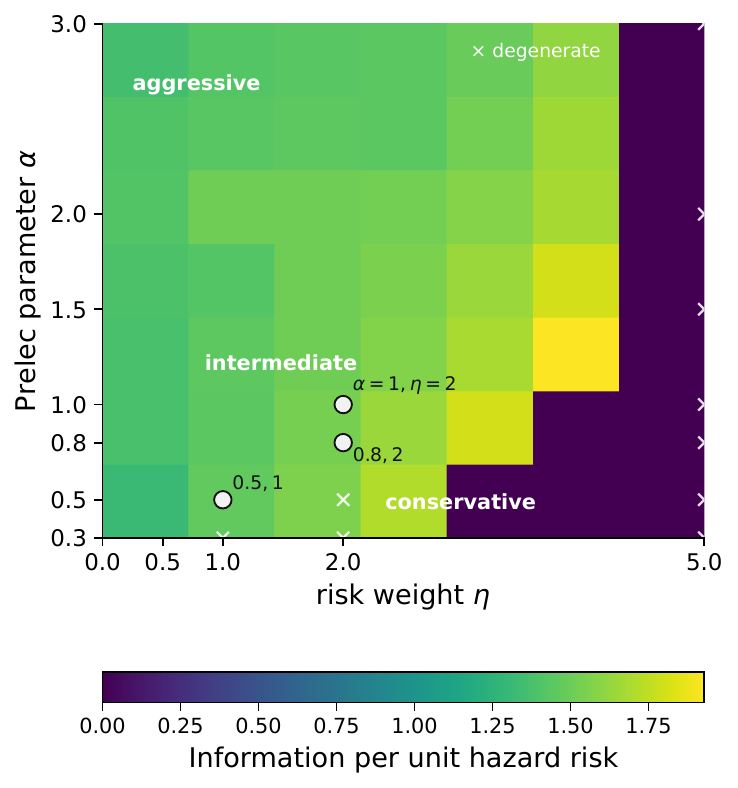}
        \caption{Behavioral parameter regimes}
        \label{fig:alpha_eta_efficiency}
    \end{subfigure}
    \caption{Stage 1 valuation effects. (a) Changing only the valuation objective moves policies from high-information/high-risk Shannon behavior toward intermediate and conservative risk-aware regimes. (b) The \((\alpha,\eta)\) sweep organizes Behavioral valuation into aggressive, intermediate, conservative, and suppressed-exploration regimes.}
    \label{fig:stage1_results}
\end{figure}
\begin{table}[t]
\centering
\caption{Stage 1 aggregate results.}
\label{tab:stage1}
\begin{tabular}{lrrrrrr}
\toprule
Method & Entropy & Hazard risk & Total risk & Losses & Hazard loss & Malf. loss \\
\midrule
Shannon & 41.72 & 30.96 & 33.00 & 21.21 & 14.62 & 7.36 \\
Risk-pen. Shannon \((\eta{=}2)\) & 38.90 & 21.10 & 25.37 & 16.22 & 8.00 & 8.55 \\
Beh. risk \((\alpha{=}1,\eta{=}2)\) & 38.89 & 21.19 & 25.44 & 16.11 & 7.97 & 8.53 \\
Beh. risk \((\alpha{=}0.8,\eta{=}2)\) & 18.46 & 9.49 & 16.39 & 11.81 & 3.15 & 8.79 \\
Chance Shannon \((\delta{=}0.35)\) & 8.18 & 5.20 & 13.07 & 9.96 & 1.27 & 8.76 \\
\bottomrule
\end{tabular}
\end{table}

\Cref{tab:stage1} reports representative Stage 1 aggregate results. Shannon provides the high-information reference point, while risk-penalized Shannon is a strong linear-risk baseline. This strength is expected because it represents the linear-risk member of the same valuation-layer design space. Behavioral valuation is therefore not intended to dominate this baseline universally, but to expose nonlinear intermediate and conservative regimes that a single linear risk penalty may not capture. Failure truncation explains why these intermediate frontier points are useful: high predicted information may not be realized if the robot fails early, while overly conservative policies suppress exploration. 

\subsection{Behavioral Valuation Is Pareto-Competitive Across Scenarios}
\begin{figure}[t]
    \centering
    \includegraphics[width=0.8\linewidth]{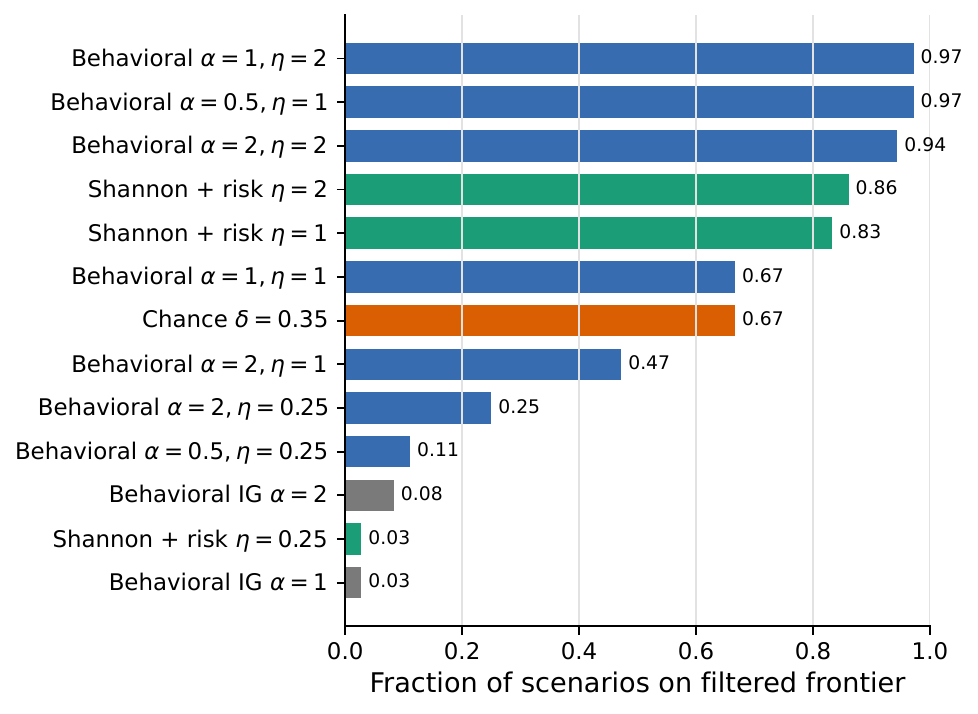}
    \caption{Stage 2 scenario-level frontier frequency across hazard structures, densities, and lethality values. Risk-augmented Behavioral settings repeatedly appear on the filtered nondegenerate frontier, while risk-penalized Shannon remains a strong baseline.}
    \label{fig:stage2_nondominated}
\end{figure}

Stage 2 tests whether the Stage 1 frontier behavior persists across hazard structures, densities, and lethality values. We measure how often each objective appears on the filtered nondegenerate scenario-level frontier, which captures whether it repeatedly produces useful information-risk tradeoffs across heterogeneous risk landscapes. \Cref{fig:stage2_nondominated} shows that several risk-augmented Behavioral settings appear on the frontier across many Stage 2 scenarios, while risk-penalized Shannon also appears frequently. Thus, Behavioral valuation does not rely on isolated favorable cases; it complements standard risk penalties by adding interpretable Pareto-competitive operating points. Table~\ref{tab:stage2} reports aggregate trends consistent with this analysis: Shannon achieves high entropy reduction with high hazard exposure, risk-penalized Shannon remains a strong high-information risk-aware baseline, and risk-augmented Behavioral valuation contributes both nearby high-information points and more conservative lower-risk regimes.

\begin{table}[t]
\centering
\caption{Stage 2 robustness aggregate results across hazard structures, densities, and lethality values.}
\label{tab:stage2}
\begin{tabular}{lrrrrrr}
\toprule
Method & Entropy & Hazard risk & Total risk & Losses & Hazard loss & Malf. loss \\
\midrule
Shannon & 44.15 & 30.62 & 32.74 & 19.94 & 12.87 & 7.72 \\
Risk-pen. Shannon \((\eta{=}2)\) & 41.66 & 21.68 & 25.82 & 15.20 & 7.20 & 8.32 \\
Beh. risk \((\alpha{=}1,\eta{=}2)\) & 42.10 & 21.79 & 25.91 & 14.80 & 7.25 & 7.92 \\
Beh. risk \((\alpha{=}0.5,\eta{=}1)\) & 31.81 & 16.22 & 21.60 & 13.39 & 4.83 & 8.81 \\
\bottomrule
\end{tabular}
\end{table}

\section{Conclusion}
\label{sec:conclusion}
This paper introduced a valuation-layer framework for hazardous robotic exploration, where acquiring information may cause failure and truncate future sensing. By holding the Bayesian update, sensor model, physical risk model, feasible paths, and planner fixed while changing only the path-ranking objective, we isolate valuation as a distinct design layer for reasoning about when information is worth the risk. We instantiated this framework with a risk-augmented Behavioral Information objective based on Prelec probability weighting, where $\eta$ controls risk sensitivity and $\alpha$ controls nonlinear risk valuation. Theoretical analysis showed that these parameters induce path-switching boundaries and transform the information-risk Pareto geometry. Large-scale experiments showed that valuation reshapes the information-risk frontier, that risk awareness can preserve realized information gain by avoiding premature failure, and that Behavioral valuation provides Pareto-competitive aggressive, intermediate, and conservative operating regimes alongside strong Shannon-based baselines.

\bibliographystyle{ieeetr}
\bibliography{references}

\end{document}